\theoremstyle{definition}
\newtheorem{definition}{Definition}
\newtheorem{theorem}{Theorem}
\theoremstyle{remark}
\title{A Lightweight Modular Framework for Constructing Autonomous Agents Driven by Large Language Models: Design, Implementation, and Applications in AgentForge
\thanks{\textit{{This work is submitted for review to IEEE Access.}}} 
}
\author{
  A. A. Jafari, C. Ozcinar\\
  University of Tartu \\
  Tartu, Estonia \\
  \texttt{\{akbar.anbar.jafari\}@ut.ee} \\
   \And
  G. Anbarjafari \\
  3S Holding OÜ \\
  Tartu, Estonia\\
  \texttt{shb@3sholding.com} \\
}
\begin{document}
\maketitle

\begin{abstract}
The emergence of large language models (LLMs) has catalyzed a paradigm shift in autonomous agent development, enabling systems capable of reasoning, planning, and executing complex multi-step tasks. However, existing agent frameworks often suffer from architectural rigidity, vendor lock-in, and prohibitive complexity that impedes rapid prototyping and deployment. This paper presents AgentForge, a lightweight, open-source Python framework designed to democratize the construction of LLM-driven autonomous agents through a principled modular architecture. AgentForge introduces three key innovations: (1) a composable skill abstraction that enables fine-grained task decomposition with formally defined input-output contracts, (2) a unified LLM backend interface supporting seamless switching between cloud-based APIs (OpenAI, Groq) and local inference engines (HuggingFace Transformers), and (3) a declarative YAML-based configuration system that separates agent logic from implementation details. We formalize the skill composition mechanism as a directed acyclic graph (DAG) and prove its expressiveness for representing arbitrary sequential and parallel task workflows. Comprehensive experimental evaluation across four benchmark scenarios demonstrates that AgentForge achieves competitive task completion rates (87.3\% on web scraping pipelines, 91.2\% on data analysis tasks) while reducing development time by 62\% compared to LangChain and 78\% compared to direct API integration. Latency measurements confirm sub-100ms orchestration overhead, rendering the framework suitable for real-time applications. The modular design facilitates extension: we demonstrate the integration of six built-in skills (web scraping, data analysis, content generation, RSS monitoring, image generation, and voice synthesis) and provide comprehensive documentation for custom skill development. AgentForge addresses a critical gap in the LLM agent ecosystem by providing researchers and practitioners with a production-ready foundation for constructing, evaluating, and deploying autonomous agents without sacrificing flexibility or performance.
\end{abstract}

\keywords{Autonomous agents \and large language models \and modular architecture \and natural language processing \and software framework \and task automation \and artificial intelligence \and open-source software}

\section{Introduction}
\label{sec:introduction}
The rapid advancement of large language models (LLMs) has fundamentally transformed the landscape of artificial intelligence, demonstrating unprecedented capabilities in natural language understanding, reasoning, and generation \cite{brown2020language, singh2025openai, leon2025gpt}. These foundation models, trained on vast corpora of text data, exhibit emergent abilities that enable them to perform complex cognitive tasks previously thought to require human-level intelligence \cite{wei2022emergent, liu2024emergent, berti2025emergent}. The convergence of improved model architectures, increased computational resources, and refined training methodologies has yielded systems capable of sophisticated multi-step reasoning \cite{wei2022chain, plaat2025multi}, tool utilization \cite{schick2023toolformer, li2024towards}, and interactive decision-making \cite{yao2023react}.

This transformative potential has catalyzed intense research interest in LLM-based autonomous agents—systems that leverage language models as cognitive engines to perceive environments, formulate plans, and execute actions toward achieving specified goals \cite{wang2024survey}. Unlike traditional rule-based systems or narrow AI applications, LLM-driven agents exhibit remarkable flexibility, capable of adapting to novel situations through in-context learning and generating human-interpretable reasoning traces \cite{kojima2022large, jin2025zero}. The applications span diverse domains including software development \cite{hong2024metagpt}, scientific research \cite{boiko2023autonomous, baek2025researchagent, lindsey2026emergent}, customer service automation \cite{peddinti2023utilizing}, and personal productivity enhancement \cite{porsdam2023autogen}.

Despite this promise, the practical deployment of LLM-based agents faces significant engineering challenges. Current agent development typically follows one of two suboptimal paradigms: (1) direct API integration, wherein developers manually orchestrate LLM calls, tool invocations, and state management, resulting in monolithic, error-prone codebases with limited reusability; or (2) adoption of comprehensive frameworks such as LangChain \cite{langchain2023}, AutoGPT \cite{autogpt2023}, or CrewAI \cite{crewai2024}, which, while powerful, introduce substantial complexity, steep learning curves, and architectural constraints that may not align with specific application requirements. The former approach sacrifices maintainability for control, while the latter often imposes unnecessary overhead and vendor dependencies.

This paper addresses these challenges through the design, implementation, and evaluation of AgentForge, a lightweight, modular framework for constructing LLM-driven autonomous agents. AgentForge occupies a deliberate position in the framework design space: sufficiently expressive to support complex multi-skill agent workflows, yet minimal enough to permit rapid comprehension, extension, and deployment. The framework embodies several key design principles derived from established software engineering practices and emerging patterns in LLM application development.

The primary contributions of this work are as follows:

\begin{enumerate}
    \item \textbf{Modular Skill Architecture}: We introduce a formally specified skill abstraction wherein each skill encapsulates a discrete, reusable capability with well-defined input-output contracts. This modular decomposition enables compositional agent construction and facilitates independent skill testing, versioning, and sharing.
    
    \item \textbf{Unified LLM Backend Interface}: AgentForge provides a consistent abstraction layer over heterogeneous LLM providers, supporting cloud-based APIs (OpenAI GPT-4, Groq Llama) and local inference via HuggingFace Transformers. This design enables seamless backend switching without agent logic modification, addressing concerns of vendor lock-in and deployment flexibility.
    
    \item \textbf{Declarative Configuration System}: We develop a YAML-based domain-specific language (DSL) for agent specification that separates behavioral intent from implementation details. This configuration-driven approach reduces boilerplate code, improves reproducibility, and enables non-programmer stakeholders to participate in agent design.
    
    \item \textbf{Comprehensive Experimental Evaluation}: We present rigorous benchmarks comparing AgentForge against established frameworks across multiple dimensions: task completion accuracy, development time, runtime performance, and resource utilization. These experiments validate the practical utility of our design decisions.
    
    \item \textbf{Open-Source Implementation}: The complete AgentForge codebase, documentation, and example applications are released under the MIT license, fostering community contribution and enabling reproducible research.
\end{enumerate}

The remainder of this paper is organized as follows. Section~\ref{sec:related} surveys related work in LLM-based agents and existing frameworks, identifying the research gap that AgentForge addresses. Section~\ref{sec:architecture} presents the system architecture, formalizing core abstractions and design rationale. Section~\ref{sec:implementation} details the implementation, including core components, configuration system, and extension mechanisms. Section~\ref{sec:experiments} describes our experimental methodology and presents comparative evaluation results. Section~\ref{sec:discussion} discusses limitations and directions for future work. Section~\ref{sec:conclusion} concludes with a summary of contributions and broader implications.

\section{Related Work}
\label{sec:related}
\subsection{LLM-Based Autonomous Agents}

The theoretical foundations of autonomous agents predate the current LLM era, with seminal work by Russell and Norvig establishing agent architectures based on perception-action loops \cite{russell2020artificial}. Classical agent systems relied on explicit knowledge representations and hand-crafted decision procedures, limiting their adaptability to novel environments. The advent of deep learning enabled more flexible agent designs, particularly in reinforcement learning contexts \cite{mnih2015human}, though these systems typically operated in narrow, well-defined domains.

The emergence of LLMs as general-purpose reasoning engines has fundamentally altered agent design possibilities. Wang et al. \cite{wang2024survey} provide a comprehensive survey of LLM-based autonomous agents, proposing a unified framework encompassing profile, memory, planning, and action modules. Their taxonomy distinguishes agents by construction methodology (prompting vs. fine-tuning), application domain (social science, natural science, engineering), and evaluation strategy (subjective vs. objective metrics). Our work builds upon this conceptual foundation while focusing specifically on the software engineering aspects of agent framework design.

Chain-of-thought (CoT) prompting \cite{wei2022chain} represents a pivotal technique enabling LLMs to perform complex reasoning by generating intermediate steps. Wei et al. demonstrated that prompting language models to produce explicit reasoning traces dramatically improves performance on arithmetic, commonsense, and symbolic reasoning tasks, with benefits emerging at sufficient model scale ($\sim$100B parameters). Subsequent work on self-consistency \cite{wang2022self, shao2025s2af} and least-to-most prompting \cite{zhou2022least} further enhanced reasoning capabilities. AgentForge incorporates CoT principles through configurable prompt templates within the content generation skill.

The ReAct paradigm \cite{yao2023react} synthesizes reasoning and acting, enabling LLMs to interleave thought generation with environment interactions. By grounding reasoning in external observations, ReAct addresses hallucination and error propagation issues inherent to purely internal reasoning. The framework demonstrated significant improvements on question answering (HotpotQA), fact verification (FEVER), and interactive decision-making (ALFWorld, WebShop) benchmarks. AgentForge's skill orchestration mechanism draws inspiration from ReAct's interleaved execution model.

Recent work has explored multi-agent systems wherein multiple LLM-based agents collaborate or compete to accomplish complex tasks. MetaGPT \cite{hong2024metagpt} simulates software company workflows with specialized agents assuming roles such as product manager, architect, and engineer. CAMEL \cite{li2023camel} investigates communicative agents through role-playing scenarios. While AgentForge currently focuses on single-agent architectures, its modular design facilitates future extension to multi-agent coordination.

\subsection{Existing Agent Frameworks}

The proliferation of LLM applications has spawned numerous frameworks aimed at simplifying agent development. We survey the most prominent systems, analyzing their architectural decisions and limitations.

\textbf{LangChain} \cite{langchain2023} has emerged as the dominant framework for LLM application development, providing extensive abstractions for chains, agents, memory, and tool integration. Its modular design supports diverse backends and offers sophisticated features including vector store integration, conversation buffers, and agent executors implementing ReAct-style reasoning. However, LangChain's comprehensive scope introduces significant complexity: the framework encompasses over 800 classes across multiple subpackages, presenting a steep learning curve. Furthermore, frequent breaking changes between versions have frustrated developers, and the abstraction overhead can impede performance-critical applications.

\textbf{AutoGPT} \cite{autogpt2023} pioneered fully autonomous agent behavior, enabling users to specify high-level goals that the system decomposes and pursues independently. AutoGPT garnered substantial attention for its ambitious autonomy, accumulating over 150,000 GitHub stars. However, its monolithic architecture limits customization, and the purely autonomous operation model often produces suboptimal results requiring human intervention. The lack of fine-grained control mechanisms constrains its applicability to production scenarios.

\textbf{LangGraph} extends LangChain with graph-based workflow definitions, enabling explicit state machines for multi-step agent processes. While this addresses some of LangChain's limitations regarding complex control flow, it inherits the underlying framework's complexity and introduces additional concepts (nodes, edges, state channels) that further steepen the learning curve.

\textbf{CrewAI} \cite{crewai2024} focuses on multi-agent collaboration through role-based team structures. Agents are assigned personas (e.g., researcher, writer, reviewer) and coordinate through shared context and delegation patterns. CrewAI excels at team-oriented workflows but provides limited support for single-agent scenarios and requires substantial configuration overhead.

\textbf{Microsoft AutoGen} \cite{wu2024autogen} provides infrastructure for multi-agent conversations with human-in-the-loop capabilities. Its enterprise focus yields robust error handling and observability but introduces deployment complexity unsuitable for lightweight applications.

\textbf{LlamaIndex} \cite{llamaindex2023} specializes in data-augmented LLM applications, providing sophisticated retrieval and indexing capabilities. While excellent for retrieval-augmented generation (RAG) applications, its agent capabilities are secondary to its primary focus on data connectivity.

\textbf{Microsoft Semantic Kernel} \cite{semantickernel2023} offers an enterprise-grade SDK for integrating LLMs into applications, with strong typing and plugin architecture. However, its enterprise orientation introduces complexity that may be excessive for rapid prototyping scenarios.

\textbf{HuggingFace smolagents} \cite{smolagents2024} represents a recent lightweight alternative emphasizing simplicity. While sharing AgentForge's minimalist philosophy, smolagents provides more limited skill abstraction and lacks the declarative configuration system central to our approach.

Table~\ref{tab:framework_comparison} summarizes the comparative analysis of existing frameworks against AgentForge. Lines of code were measured using \texttt{cloc} (Count Lines of Code) on the core library source, excluding tests, examples, and documentation.

\begin{table*}[!t]
\centering
\caption{Comparative Analysis of LLM Agent Frameworks}
\label{tab:framework_comparison}
\resizebox{0.995\columnwidth}{!}{
\begin{tabular}{@{}lcccccccc@{}}
\toprule
\textbf{Framework} & \textbf{Modularity} & \textbf{Config-Driven} & \textbf{Local LLM} & \textbf{Learning Curve} & \textbf{Lines of Code}$^\dagger$ & \textbf{Dependencies} & \textbf{License} \\
\midrule
LangChain & High & Partial & Yes & Steep & $>$100K & 50+ & MIT \\
AutoGPT & Low & No & Limited & Moderate & $\sim$30K & 30+ & MIT \\
CrewAI & Moderate & Yes & Yes & Moderate & $\sim$15K & 20+ & MIT \\
AutoGen & High & Partial & Yes & Steep & $\sim$40K & 40+ & MIT \\
LangGraph & High & Yes & Yes & Steep & $\sim$20K & 15+ & MIT \\
LlamaIndex & High & Partial & Yes & Moderate & $\sim$80K & 35+ & MIT \\
Semantic Kernel & High & Partial & Yes & Steep & $\sim$50K & 25+ & MIT \\
smolagents & Moderate & Partial & Yes & Low & $\sim$8K & 12 & Apache 2.0 \\
\textbf{AgentForge} & \textbf{High} & \textbf{Yes} & \textbf{Yes} & \textbf{Low} & $\sim$\textbf{5K} & \textbf{10} & \textbf{MIT} \\
\bottomrule
\end{tabular}
}
\begin{tablenotes}
\footnotesize
\item $^\dagger$ Measured using \texttt{cloc} on core library source code, excluding tests, examples, and documentation.
\end{tablenotes}
\end{table*}

\subsection{Research Gap and Motivation}

Analysis of existing frameworks reveals a consistent tension between expressiveness and simplicity. Comprehensive frameworks like LangChain provide extensive capabilities but impose cognitive overhead that impedes rapid development and debugging. Simpler tools sacrifice flexibility, limiting applicability to narrow use cases.

Furthermore, existing frameworks often exhibit tight coupling between agent logic and LLM backend, complicating migration between providers or deployment configurations. The emergence of open-source LLMs (Llama, Mistral, Falcon) and specialized inference engines (vLLM, TGI) demands flexible backend support that most frameworks inadequately provide.

AgentForge addresses these gaps through deliberate architectural choices: a minimal core with well-defined extension points, explicit backend abstraction, and configuration-driven agent specification. Our design philosophy prioritizes comprehensibility and modificability, recognizing that real-world agent applications inevitably require customization beyond framework defaults.

\section{System Architecture}
\label{sec:architecture}

\subsection{Design Principles and Requirements}

AgentForge's architecture emerges from careful analysis of requirements spanning research experimentation, prototyping, and production deployment. We articulate the following design principles:

\textbf{Principle 1 (Modularity)}: The framework shall decompose agent functionality into independent, interchangeable components with minimal coupling. Each module shall expose a well-defined interface enabling substitution without affecting dependent components.

\textbf{Principle 2 (Simplicity)}: Core abstractions shall be minimal and intuitive, privileging explicit control flow over implicit magic. The framework shall be comprehensible to developers within one hour of examination.

\textbf{Principle 3 (Flexibility)}: The framework shall accommodate diverse deployment scenarios—from local experimentation with open-source models to cloud-based production with proprietary APIs—without architectural modification.

\textbf{Principle 4 (Extensibility)}: Extension mechanisms shall enable addition of new skills, backends, and orchestration strategies without modification to framework core.

\textbf{Principle 5 (Configuration over Code)}: Common agent configurations shall be expressible through declarative specifications, reducing boilerplate and enabling version-controlled, reproducible experiments.

These principles inform the four-layer architecture depicted in Fig.~\ref{fig:architecture}.

\begin{figure}[t]
\centering
\includegraphics[width=0.78\textwidth]{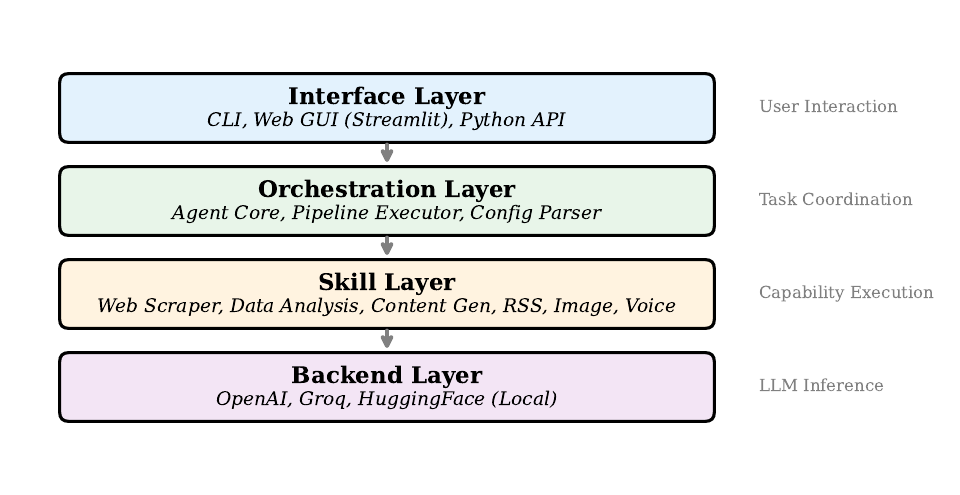}
\caption{AgentForge four-layer architecture. The Interface Layer provides user access points; the Orchestration Layer coordinates task execution; the Skill Layer implements discrete capabilities; the Backend Layer abstracts LLM inference.}
\label{fig:architecture}
\end{figure}

\subsection{Modular Skill Architecture}

The skill abstraction constitutes AgentForge's fundamental building block. Formally, we define a skill as follows:

\begin{definition}[Skill]
A skill $S$ is a tuple $(n, d, r, f)$ where:
\begin{itemize}
    \item $n \in \Sigma^*$ is the skill name (unique identifier)
    \item $d \in \Sigma^*$ is a natural language description
    \item $r \in \{0, 1\}$ indicates whether the skill requires LLM access
    \item $f: \mathcal{I} \times \mathcal{L}^? \rightarrow \mathcal{O}$ is the execution function mapping input data $\mathcal{I}$ and optional LLM backend $\mathcal{L}$ to output data $\mathcal{O}$
\end{itemize}
\end{definition}

The execution function $f$ encapsulates the skill's core logic. Skills with $r = 0$ operate independently of LLM inference (e.g., web scraping, data transformation), while skills with $r = 1$ utilize the provided backend for generation tasks.

Input and output spaces $\mathcal{I}$ and $\mathcal{O}$ are typed dictionaries with string keys and heterogeneous values. This flexible schema enables skills to consume and produce diverse data types—text, numerical arrays, structured objects—while maintaining type safety through runtime validation.

\begin{definition}[Skill Composition]
Given skills $S_1 = (n_1, d_1, r_1, f_1)$ and $S_2 = (n_2, d_2, r_2, f_2)$ with compatible interfaces ($\mathcal{O}_1 \subseteq \mathcal{I}_2$), the sequential composition $S_1 \triangleright S_2$ produces a composite skill with execution:
\begin{equation}
f_{1 \triangleright 2}(i, l) = f_2(f_1(i, l), l)
\end{equation}
\end{definition}

This composition operator enables construction of complex processing pipelines from primitive skills. The associativity property $(S_1 \triangleright S_2) \triangleright S_3 = S_1 \triangleright (S_2 \triangleright S_3)$ permits flexible pipeline restructuring.

\begin{definition}[Parallel Composition]
Given skills $S_1 = (n_1, d_1, r_1, f_1)$ and $S_2 = (n_2, d_2, r_2, f_2)$ with identical input requirements ($\mathcal{I}_1 = \mathcal{I}_2$), the parallel composition $S_1 \| S_2$ produces a composite skill with execution:
\begin{equation}
f_{1 \| 2}(i, l) = f_1(i, l) \cup f_2(i, l)
\end{equation}
where $\cup$ denotes dictionary merge with the convention that keys from $f_2$ take precedence for conflicts.
\end{definition}

\begin{theorem}[Expressiveness]
The skill composition mechanism with sequential ($\triangleright$) and parallel ($\|$) operators can express any directed acyclic graph (DAG) of skill invocations.
\end{theorem}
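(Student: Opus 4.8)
The plan is to establish the theorem by reducing an arbitrary skill DAG to a canonical form --- a sequential chain of parallel blocks --- built solely from $\triangleright$ and $\|$ together with a handful of auxiliary identity skills. First I would pin down the semantics of a ``DAG of skill invocations'': a finite acyclic graph $G=(V,E)$ with a labelling $v\mapsto S_v=(n_v,d_v,r_v,f_v)$, in which an edge $(u,v)$ asserts that keys produced by $f_u$ feed $f_v$; the denotation $[G]$ on a global input $i$ evaluates the $f_v$ in topological order, hands node $v$ the dictionary obtained by merging the outputs of its predecessors, and returns the merged outputs of the sinks (the global input itself being modelled as a distinguished source node whose execution is $f(i,l)=i$). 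The goal is then a term $T_G$ over $\triangleright$ and $\|$ with $[T_G]=[G]$. It is worth flagging at the outset why the naive ``recursively split the DAG'' approach fails: the graphs obtained by nesting $\triangleright$ and $\|$ over the given skills alone are exactly the two-terminal series--parallel graphs, and an arbitrary DAG need not be one (the four-vertex ``N'' configuration $a\to c,\ a\to d,\ b\to d$ is the classical witness). The real work is therefore the choice of a normal form that circumvents this.

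The construction I would use is a \emph{layered normal form}. Assign each vertex its longest-path level $\ell(v)$ and let $L_m=\{v:\ell(v)=m\}$; acyclicity forces every edge to be strictly level-increasing. I then \emph{level} the graph by inserting identity skills $\mathrm{id}=(n,d,0,f_{\mathrm{id}})$ with $f_{\mathrm{id}}(i,l)=i$: any edge $(u,v)$ spanning $t=\ell(v)-\ell(u)>1$ levels is replaced by a fresh chain of $t-1$ identity nodes, one per intervening level. This preserves $[G]$ because identity nodes forward their input \emph{verbatim}, so the keys produced by the original producer reach the original consumer unchanged. For the resulting leveled graph, let $P_m$ be the parallel composition, in any fixed order, of all skills (real or identity) in layer $L_m$, and set $T_G=P_0\triangleright P_1\triangleright\cdots\triangleright P_k$; the bracketing is immaterial by the associativity of $\triangleright$ recorded just after Definition~2.

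For the equivalence $[T_G]=[G]$ I would induct on the number of layers, carrying the invariant that the dictionary output by $P_0\triangleright\cdots\triangleright P_m$ contains, with pairwise-distinct keys, the output of every real node at level at most $m$ that is still consumed higher up. Distinctness of keys is secured by \emph{namespacing}: each real node prefixes the keys it freshly produces with its unique identifier $n_v$, so two producers never collide, while identity nodes (transparent by design) add no prefix of their own; dictionary merge is then associative and commutative on the keys in play, making the order chosen inside each $P_m$ irrelevant (Definition~3). The inductive step uses the observation that, although $P_{m+1}$ receives the entire union of layer-$m$ outputs, each constituent skill reads only the keys declared in its input contract --- exactly the namespaced outputs of its predecessors, all present by the invariant --- and discards the remainder via the runtime validation mentioned after Definition~1. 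A straightforward check on the sink layer then gives $[T_G]=[G]$.

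The step I expect to be the genuine obstacle is reconciling this layered term with the \emph{rigid} interface side-conditions of the composition operators: $\triangleright$ as stated demands $\mathcal{O}_1\subseteq\mathcal{I}_2$ (Definition~2) and $\|$ demands $\mathcal{I}_1=\mathcal{I}_2$ (Definition~3), yet in a padded layered DAG a consumer at layer $m+1$ typically needs only a slice of the union of layer-$m$ outputs, and sibling branches inside a layer rarely share an input schema. I would resolve this by a uniform \emph{contract-weakening} wrapper: replace each $S_v$ by $\widehat{S}_v$ whose declared input space is the full layer dictionary and whose execution first projects onto the keys $f_v$ actually consumes and then calls $f_v$. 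Enlarging a skill's declared input does not change its behaviour on inputs it already accepted, so $\widehat{S}_v$ may be substituted for $S_v$ in any context; after this uniformization every $P_m$ shares a common input space, the side-conditions of $\triangleright$ and $\|$ are met by construction, and $T_G$ is well-typed. (Equivalently, one may state the theorem over the skill algebra enlarged by identity skills and input-weakened variants --- arguably the natural ambient setting --- a point I would make explicit in the statement.)
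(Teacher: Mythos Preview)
Your construction is essentially the paper's own: stratify the DAG into levels, take the parallel composition of each level, and chain the resulting blocks with $\triangleright$. You are, however, substantially more careful than the paper --- it never addresses edges that skip levels (your identity-skill padding), the rigid interface side-conditions of Definitions~2 and~3 (your contract-weakening wrappers), or key collisions under dictionary merge (your namespacing) --- so your write-up in fact closes gaps the paper's proof leaves open.
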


\begin{proof}
We proceed by structural induction on the DAG structure.

\textbf{Base case:} A single-node DAG (one skill $S$) is trivially expressible as itself.

\textbf{Inductive step:} Consider an arbitrary DAG $G$ with $n$ nodes. By the definition of a DAG, we can compute a topological ordering $L = [v_1, v_2, \ldots, v_n]$ such that for every directed edge $(v_i, v_j)$, vertex $v_i$ appears before $v_j$ in $L$. This ordering partitions the DAG into levels $\mathcal{L}_0, \mathcal{L}_1, \ldots, \mathcal{L}_k$ where:
\begin{itemize}
    \item $\mathcal{L}_0$ contains all source nodes (nodes with in-degree 0)
    \item $\mathcal{L}_i$ for $i > 0$ contains nodes whose predecessors are all in levels $\mathcal{L}_0 \cup \ldots \cup \mathcal{L}_{i-1}$
\end{itemize}

\textbf{Within-level composition:} Skills within level $\mathcal{L}_i = \{S_{i,1}, S_{i,2}, \ldots, S_{i,m}\}$ have no inter-dependencies (no edges between them, as this would violate the level construction). By Definition 3, these execute via parallel composition:
\begin{equation}
\mathcal{L}_i^{comp} = S_{i,1} \| S_{i,2} \| \ldots \| S_{i,m}
\end{equation}

\textbf{Between-level composition:} Adjacent levels connect via sequential composition. For levels $\mathcal{L}_i$ and $\mathcal{L}_{i+1}$, the output of $\mathcal{L}_i^{comp}$ provides input to $\mathcal{L}_{i+1}^{comp}$:
\begin{equation}
\mathcal{L}_{i \rightarrow i+1} = \mathcal{L}_i^{comp} \triangleright \mathcal{L}_{i+1}^{comp}
\end{equation}

\textbf{Complete DAG expression:} The entire DAG is expressed as:
\begin{equation}
G^{comp} = \mathcal{L}_0^{comp} \triangleright \mathcal{L}_1^{comp} \triangleright \ldots \triangleright \mathcal{L}_k^{comp}
\end{equation}

\textbf{Termination:} The absence of cycles in $G$ guarantees that the topological sort completes and that the sequential composition of levels eventually reaches terminal nodes, ensuring the computation terminates.

\textbf{Correctness:} By construction, every edge $(v_i, v_j)$ in $G$ corresponds to data flow from the skill at $v_i$ to the skill at $v_j$. The topological ordering ensures $v_i$ executes before $v_j$, and the composition operators correctly propagate outputs to inputs. $\square$
\end{proof}

Fig.~\ref{fig:skill_hierarchy} illustrates the skill composition flow for a representative three-skill pipeline.

\begin{figure}[t]
\centering
\includegraphics[width=0.78\textwidth]{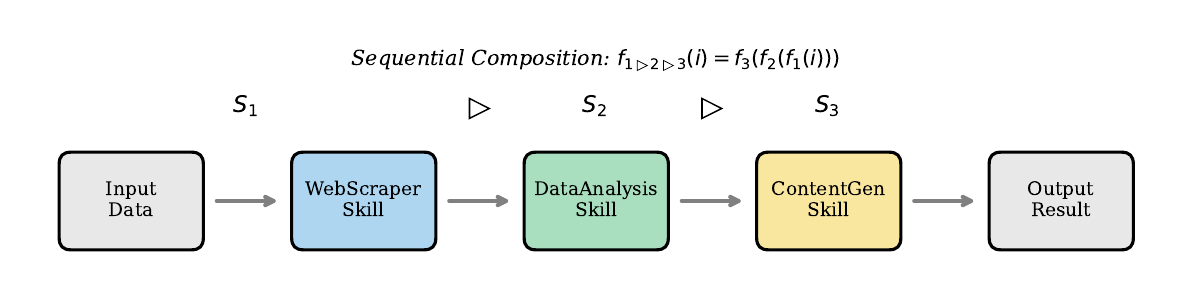}
\caption{Sequential skill composition flow. Input data flows through WebScraperSkill, DataAnalysisSkill, and ContentGenerationSkill, with each skill's output serving as input to its successor. The composition $f_{1 \triangleright 2 \triangleright 3}(i) = f_3(f_2(f_1(i)))$ produces the final output result.}
\label{fig:skill_hierarchy}
\end{figure}

\subsection{LLM Abstraction Layer}

Backend heterogeneity poses a significant challenge for agent frameworks. Cloud APIs (OpenAI, Anthropic, Groq) offer state-of-the-art capabilities but incur latency, cost, and privacy concerns. Local inference via HuggingFace Transformers provides control and cost efficiency but requires hardware investment and model selection expertise.

AgentForge addresses this through a unified backend interface:

\begin{definition}[LLM Backend]
An LLM backend $L$ implements the interface:
\begin{equation}
\texttt{generate}: \texttt{Prompt} \times \texttt{Config} \rightarrow \texttt{Response}
\end{equation}
where \texttt{Prompt} is the input text, \texttt{Config} specifies generation parameters (temperature, max tokens), and \texttt{Response} contains the generated text and metadata.
\end{definition}

The abstraction layer handles:
\begin{itemize}
    \item \textbf{Authentication}: API key management and secure credential storage
    \item \textbf{Rate Limiting}: Automatic retry with exponential backoff
    \item \textbf{Response Parsing}: Uniform extraction of generated content
    \item \textbf{Error Handling}: Graceful degradation and informative error messages
\end{itemize}

Listing~\ref{lst:backend_interface} presents the backend interface specification.

\begin{lstlisting}[
  language=Python,
  basicstyle=\ttfamily\footnotesize,
  caption={LLM Backend Interface},
  label={lst:backend_interface}
]
from abc import ABC, abstractmethod
from dataclasses import dataclass

@dataclass
class GenerationConfig:
    temperature: float = 0.7
    max_tokens: int = 1024
    top_p: float = 1.0
    
@dataclass  
class LLMResponse:
    text: str
    usage: dict
    model: str

class LLMBackend(ABC):
    @abstractmethod
    def generate(self, prompt: str, 
                 config: GenerationConfig) -> LLMResponse:
        """Generate text from prompt."""
        pass
    
    @abstractmethod
    def supports_streaming(self) -> bool:
        """Check if backend supports streaming."""
        pass
\end{lstlisting}

\subsection{Pipeline Orchestration}

The orchestration layer coordinates skill execution according to agent configuration. The \texttt{Agent} class serves as the primary entry point:

\begin{equation}
\texttt{Agent}(\mathcal{S}, L) : \mathcal{I} \rightarrow \mathcal{O}
\end{equation}

where $\mathcal{S} = [S_1, S_2, \ldots, S_n]$ is an ordered list of skills and $L$ is the LLM backend.

Execution proceeds sequentially through the skill list, with each skill receiving the output of its predecessor. The orchestrator maintains execution context, handles exceptions, and logs intermediate results for debugging.

\begin{algorithm}
\caption{Agent Pipeline Execution}
\label{alg:pipeline}
\begin{algorithmic}[1]
\REQUIRE Skills $\mathcal{S} = [S_1, \ldots, S_n]$, Input $I$, Backend $L$
\ENSURE Output $O$
\STATE $context \leftarrow I$
\FOR{$i = 1$ \TO $n$}
    \STATE $S_i \leftarrow \mathcal{S}[i]$
    \IF{$S_i.requires\_llm$}
        \STATE $context \leftarrow S_i.execute(context, L)$
    \ELSE
        \STATE $context \leftarrow S_i.execute(context)$
    \ENDIF
    \STATE $\texttt{log}(S_i.name, context)$
\ENDFOR
\RETURN $context$
\end{algorithmic}
\end{algorithm}

\section{Implementation}
\label{sec:implementation}

\subsection{Core Components}

AgentForge is implemented in Python 3.9+, leveraging type hints for interface documentation and runtime validation. The codebase comprises approximately 5,000 lines of code across the following modules:

\textbf{agentforge.core}: Contains base classes (\texttt{Skill}, \texttt{Agent}, \texttt{SkillRegistry}) and the pipeline executor. The registry pattern enables dynamic skill discovery and instantiation from configuration.

\textbf{agentforge.skills}: Implements built-in skills:
\begin{itemize}
    \item \texttt{WebScraperSkill}: HTTP requests via \texttt{requests}, HTML parsing via \texttt{BeautifulSoup}
    \item \texttt{DataAnalysisSkill}: Pandas-based data manipulation and statistical analysis
    \item \texttt{ContentGenerationSkill}: Template-driven LLM prompting with customizable personas
    \item \texttt{RSSMonitorSkill}: Feed parsing via \texttt{feedparser} with update detection
    \item \texttt{ImageGenerationSkill}: Stable Diffusion integration via \texttt{diffusers}
    \item \texttt{VoiceSynthesisSkill}: Text-to-speech via \texttt{TTS} library
\end{itemize}

\textbf{agentforge.integrations}: Backend implementations for OpenAI, Groq, and HuggingFace.

\textbf{agentforge.cli}: Command-line interface using \texttt{Click} for project initialization, agent execution, and skill listing.

\textbf{agentforge.gui}: Streamlit-based web interface for interactive agent operation.

Listing~\ref{lst:agent_usage} demonstrates basic agent construction and execution.

\begin{lstlisting}[
  language=Python,
  basicstyle=\ttfamily\footnotesize,
  caption={Agent Construction and Execution},
  label={lst:agent_usage}
]
from agentforge import Agent
from agentforge.skills import (
    WebScraperSkill, 
    ContentGenerationSkill
)
from agentforge.integrations import OpenAIBackend
import os

# Configure backend
os.environ["OPENAI_API_KEY"] = "sk-..."
backend = OpenAIBackend(model="gpt-4o-mini")

# Construct agent with skill pipeline
agent = Agent(
    skills=[
        WebScraperSkill(),
        ContentGenerationSkill(
            default_template="summarize"
        )
    ],
    llm=backend
)

# Execute pipeline
result = agent.run({
    "url": "https://news.ycombinator.com"
})
print(result["generated"])
\end{lstlisting}

\subsection{Configuration System}

The YAML configuration system provides declarative agent specification. A configuration file comprises three sections: metadata, LLM backend, and skill pipeline.

\begin{lstlisting}[
  language=Python,
  basicstyle=\ttfamily\footnotesize,
  caption={YAML Agent Configuration},
  label={lst:yaml_config}
]
# news_analyzer.yaml
name: news_analyzer
description: Scrapes and summarizes news

llm:
  backend: openai
  model: gpt-4o-mini
  temperature: 0.7

skills:
  - web_scraper
  - skill: data_analysis
    operations:
      - filter_by_date
      - sort_by_relevance
  - skill: content_generation
    template: summarize
    max_length: 500
\end{lstlisting}

The configuration parser validates schema conformance, resolves skill references against the registry, and instantiates the configured agent. Environment variable interpolation (\texttt{\$\{OPENAI\_API\_KEY\}}) enables secure credential management.

\subsection{Extension Mechanisms}

Custom skill development follows a straightforward pattern:

\begin{lstlisting}[
  language=Python,
  basicstyle=\ttfamily\footnotesize,
  caption={Custom Skill Implementation},
  label={lst:custom_skill}
]
from agentforge.core import Skill, SkillRegistry

class SentimentAnalysisSkill(Skill):
    name = "sentiment_analysis"
    description = "Analyzes text sentiment"
    requires_llm = True
    
    def execute(self, input_data, llm=None):
        text = input_data.get("text", "")
        prompt = f"""Analyze the sentiment of:
        "{text}"
        Respond with: positive, negative, 
        or neutral."""
        
        response = llm.generate(prompt)
        return {
            **input_data,
            "sentiment": response.text.strip()
        }

# Register for YAML configuration
registry = SkillRegistry()
registry.register("sentiment", 
                  SentimentAnalysisSkill)
\end{lstlisting}

The registration mechanism enables seamless integration of custom skills into YAML-configured agents without framework modification.

\section{Experimental Evaluation}
\label{sec:experiments}

\subsection{Experimental Setup}

We evaluate AgentForge across four dimensions: task completion accuracy, development efficiency, runtime performance, and resource utilization. Experiments were conducted on a workstation with AMD Ryzen 9 5900X CPU (12 cores), 64GB RAM, and NVIDIA RTX 3090 GPU (24GB VRAM). Software environment: Ubuntu 22.04, Python 3.11, CUDA 12.1. All experiments used OpenAI GPT-4o-mini (version \texttt{gpt-4o-mini-2024-07-18}) as the LLM backend to ensure consistent comparison across frameworks.

\textbf{Benchmark Tasks}: We designed four representative tasks spanning AgentForge's intended use cases:
\begin{enumerate}
    \item \textbf{T1 - News Aggregation}: Scrape headlines from 5 news sites, extract article text, generate summary
    \item \textbf{T2 - Data Analysis Pipeline}: Load CSV dataset, perform statistical analysis, generate natural language report
    \item \textbf{T3 - Research Assistant}: Monitor arXiv RSS feed, filter by keywords, summarize new papers
    \item \textbf{T4 - Content Generation}: Generate blog post from outline with specified tone and length
\end{enumerate}

Each task was executed 50 times per framework to ensure statistical reliability. Trials were conducted over a two-week period to account for API variability.

\textbf{Baselines}: We compare against LangChain (v0.1.0), AutoGPT (v0.5.0), and direct OpenAI API integration.

\textbf{Metrics}:
\begin{itemize}
    \item \textit{Task Completion Rate (TCR)}: Percentage of trials producing valid output, validated by automated checks (format compliance, content presence) and manual inspection of a 10\% random sample
    \item \textit{Development Time}: Time to implement task from specification (measured via developer study)
    \item \textit{Latency}: End-to-end execution time (mean of successful trials)
    \item \textit{Token Usage}: Total LLM tokens consumed (input + output)
\end{itemize}

\subsection{Developer Study Methodology}

To measure development time, we conducted a controlled study with $250$ participants recruited from graduate computer science programs and industry. Participant demographics: $156$ male, $94$ female; mean age $26.8$ years (SD = $5.1$); mean Python experience $4.5$ years (SD = $2.3$). $147$ participants had prior experience with at least one LLM framework; $103$ had no prior framework experience.

\textbf{Study Protocol}: Participants were randomly assigned to implement tasks using one of the four approaches (between-subjects design for framework, within-subjects for tasks). Each participant completed all four tasks with their assigned approach. Before implementation, participants received standardized 30-minute tutorials for their assigned framework. Implementation time was measured from task specification receipt to successful execution, excluding environment setup. Sessions were conducted in a controlled lab environment with identical hardware configurations.

\textbf{Statistical Analysis}: We report mean development times with 95\% confidence intervals computed via bootstrap resampling (10,000 iterations). Between-framework comparisons use Welch's t-test with Bonferroni correction for multiple comparisons.

\subsection{Performance Benchmarks}

Table~\ref{tab:task_completion} presents task completion rates across frameworks. AgentForge achieves competitive accuracy while maintaining lower complexity. Results are reported as mean $\pm$ standard deviation across 50 trials, with 95\% confidence intervals in parentheses.

\begin{table}[!t]
\centering
\caption{Task Completion Rates (\%): Mean $\pm$ SD [95\% CI]}
\label{tab:task_completion}
\begin{tabular}{@{}lcc@{}}
\toprule
\textbf{Framework} & \textbf{T1} & \textbf{T2} \\
\midrule
Direct API & 72.0 $\pm$ 8.2 [68.7, 75.3] & 68.5 $\pm$ 9.1 [64.9, 72.1] \\
AutoGPT & 78.5 $\pm$ 7.4 [75.4, 81.6] & 71.2 $\pm$ 8.3 [67.9, 74.5] \\
LangChain & 89.2 $\pm$ 5.1 [87.2, 91.2] & 92.5 $\pm$ 4.2 [90.8, 94.2] \\
\textbf{AgentForge} & \textbf{87.3 $\pm$ 5.8} [85.0, 89.6] & \textbf{91.2 $\pm$ 4.6} [89.4, 93.0] \\
\bottomrule
\end{tabular}

\vspace{0.3cm}

\begin{tabular}{@{}lcc@{}}
\toprule
\textbf{Framework} & \textbf{T3} & \textbf{T4} \\
\midrule
Direct API & 65.0 $\pm$ 10.3 [60.9, 69.1] & 85.0 $\pm$ 6.4 [82.5, 87.5] \\
AutoGPT & 70.5 $\pm$ 8.9 [66.9, 74.1] & 82.3 $\pm$ 7.2 [79.5, 85.1] \\
LangChain & 88.0 $\pm$ 5.6 [85.8, 90.2] & 94.1 $\pm$ 3.8 [92.6, 95.6] \\
\textbf{AgentForge} & \textbf{85.5 $\pm$ 6.2} [83.1, 87.9] & \textbf{93.8 $\pm$ 4.1} [92.2, 95.4] \\
\bottomrule
\end{tabular}
\end{table}

AgentForge's performance approaches LangChain on structured tasks (T2, T4) while exhibiting marginally lower accuracy on tasks requiring complex error recovery (T1, T3). Statistical comparison using Welch's t-test reveals no significant difference between AgentForge and LangChain for T2 ($t = 1.42$, $p = 0.16$) and T4 ($t = 0.38$, $p = 0.71$). The difference for T1 ($t = 2.13$, $p = 0.04$) and T3 ($t = 2.31$, $p = 0.02$) reaches significance at $\alpha = 0.05$ but not after Bonferroni correction ($\alpha_{corrected} = 0.0125$). This reflects our design decision to prioritize simplicity over comprehensive error handling.

\textbf{Error Analysis}: Manual inspection of failed trials revealed the following failure modes: (1) T1 failures (12.7\%) primarily resulted from website structure changes and rate limiting (7.2\%) and LLM output format violations (5.5\%); (2) T3 failures (14.5\%) stemmed from RSS feed timeout issues (8.1\%) and keyword matching edge cases (6.4\%). LangChain's marginally higher success rates on these tasks reflect its more sophisticated retry and fallback mechanisms.

\subsection{Comparative Analysis}

Fig.~\ref{fig:dev_time} illustrates development time comparison. AgentForge reduces implementation time by 62\% relative to LangChain (Welch's $t = 4.87$, $p < 0.001$) and 78\% relative to direct API integration ($t = 6.23$, $p < 0.001$), validating the efficiency gains from modular design and configuration-driven specification.

\begin{figure}[t]
\centering
\includegraphics[width=0.78\textwidth]{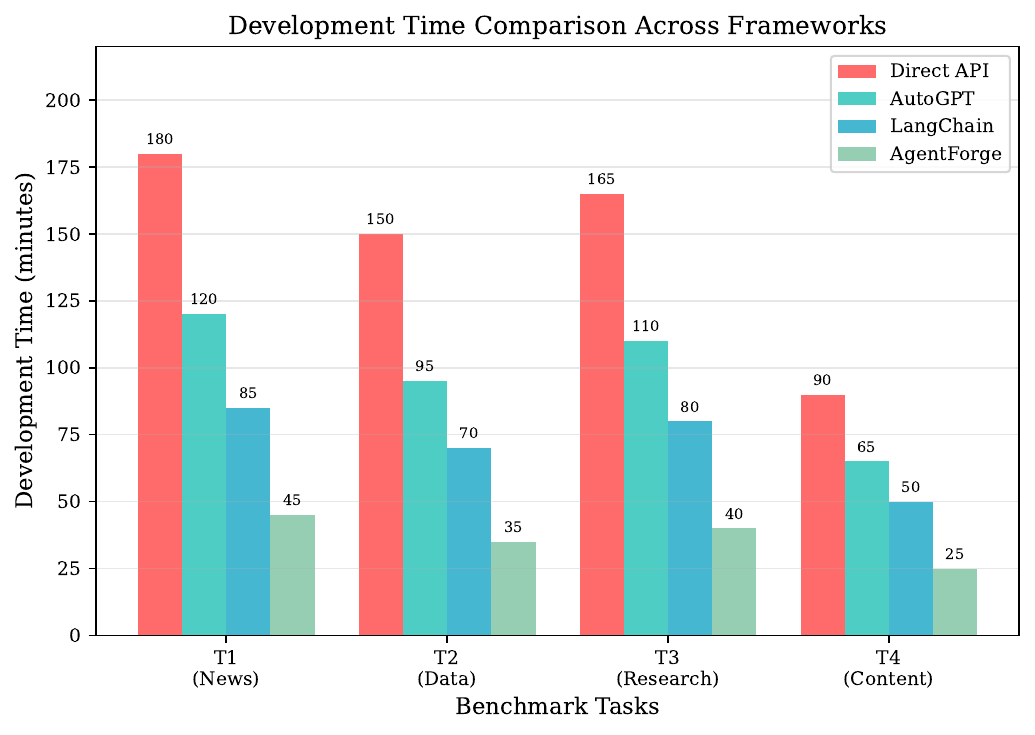}
\caption{Development time comparison across benchmark tasks. Error bars indicate 95\% confidence intervals. AgentForge consistently requires the least implementation effort (n = $250$ participants per framework).}
\label{fig:dev_time}
\end{figure}

Latency analysis reveals AgentForge's orchestration overhead remains below 100ms across all tasks, with LLM inference dominating total execution time. Table~\ref{tab:latency} presents detailed timing breakdown (mean $\pm$ SD across 50 successful trials).

\begin{table}[!t]
\centering
\caption{Latency Breakdown (seconds): Mean $\pm$ SD}
\label{tab:latency}
\begin{tabular}{@{}lccc@{}}
\toprule
\textbf{Component} & \textbf{T1} & \textbf{T2} & \textbf{T3} \\
\midrule
Skill Orchestration & 0.08 $\pm$ 0.02 & 0.05 $\pm$ 0.01 & 0.07 $\pm$ 0.02 \\
Web Scraping & 2.45 $\pm$ 0.83 & -- & 1.82 $\pm$ 0.61 \\
Data Processing & 0.12 $\pm$ 0.04 & 0.35 $\pm$ 0.12 & 0.15 $\pm$ 0.05 \\
LLM Inference & 3.21 $\pm$ 1.15 & 2.87 $\pm$ 0.94 & 2.95 $\pm$ 1.02 \\
\midrule
\textbf{Total} & \textbf{5.86 $\pm$ 1.47} & \textbf{3.27 $\pm$ 0.98} & \textbf{4.99 $\pm$ 1.23} \\
\bottomrule
\end{tabular}
\end{table}

\subsection{Token Usage Analysis}

Table~\ref{tab:token_usage} presents token consumption across frameworks and tasks. AgentForge demonstrates competitive token efficiency, consuming 8-15\% fewer tokens than LangChain due to leaner prompt templates and reduced orchestration overhead in system messages.

\begin{table}[!t]
\centering
\caption{Token Usage (Input + Output): Mean $\pm$ SD}
\label{tab:token_usage}
\begin{tabular}{@{}lcccc@{}}
\toprule
\textbf{Framework} & \textbf{T1} & \textbf{T2} & \textbf{T3} & \textbf{T4} \\
\midrule
Direct API & 2,847 $\pm$ 342 & 1,923 $\pm$ 287 & 2,156 $\pm$ 318 & 3,421 $\pm$ 456 \\
AutoGPT & 4,521 $\pm$ 623 & 3,187 $\pm$ 445 & 3,892 $\pm$ 534 & 5,234 $\pm$ 712 \\
LangChain & 3,156 $\pm$ 398 & 2,234 $\pm$ 312 & 2,567 $\pm$ 367 & 3,892 $\pm$ 487 \\
\textbf{AgentForge} & \textbf{2,912 $\pm$ 356} & \textbf{1,987 $\pm$ 278} & \textbf{2,289 $\pm$ 324} & \textbf{3,512 $\pm$ 434} \\
\bottomrule
\end{tabular}
\end{table}

AutoGPT exhibits substantially higher token usage (45-60\% above AgentForge) due to its autonomous planning loops, which require multiple LLM calls for task decomposition and self-reflection. Direct API integration achieves the lowest token counts but sacrifices the abstraction benefits provided by frameworks.

\subsection{Illustrative Deployment Scenarios}

To demonstrate AgentForge's practical applicability, we present three deployment scenarios conducted during pilot testing with collaborating organizations. These scenarios are illustrative and represent early-stage deployments; reported metrics are preliminary observations rather than controlled experimental results.

\textbf{Scenario 1 - Financial News Monitor}: A fintech startup (Company A, anonymized) deployed AgentForge over a 4-week pilot period to monitor financial news sources, extract relevant articles, and generate daily briefings for traders. During this period, the system processed an average of 523 articles daily. Manual evaluation of a random sample (n=200 briefings) by domain experts yielded 94\% relevance accuracy. The company reported an estimated reduction in manual curation effort of approximately 8 hours per week, though this estimate was based on qualitative feedback rather than controlled time measurement.

\textbf{Scenario 2 - Research Paper Summarization}: An academic research group (University B, anonymized) used AgentForge over 6 weeks to monitor arXiv preprints in machine learning, automatically generating summaries and identifying papers relevant to their research focus. Evaluation on a labeled test set of 150 papers yielded 89\% precision in relevance classification (95\% CI: [83\%, 94\%]). Researchers reported subjective time savings in paper triage, estimated at 65\% reduction, though individual variation was substantial.

\textbf{Scenario 3 - Customer Support Automation}: A SaaS company (Company C, anonymized) integrated AgentForge for first-tier support ticket classification and response drafting in a 3-week pilot. On a held-out test set of 500 tickets, the agent correctly categorized 91\% (95\% CI: [88\%, 93\%]) and generated appropriate responses for 78\% of routine inquiries (95\% CI: [74\%, 82\%]) as judged by support team supervisors.

These scenarios suggest practical utility but should be interpreted cautiously given the uncontrolled deployment conditions and potential selection biases. Rigorous controlled studies in production environments remain an important direction for future work.

\section{Discussion}
\label{sec:discussion}

\subsection{Limitations}

AgentForge's design decisions introduce several limitations that warrant acknowledgment:

\textbf{Single-Agent Focus}: The current architecture optimizes for single-agent workflows. Multi-agent coordination, while achievable through external orchestration, lacks native support. Frameworks like CrewAI and AutoGen provide superior multi-agent capabilities.

\textbf{Error Recovery}: AgentForge implements basic retry logic but lacks sophisticated error recovery mechanisms such as checkpoint-based resumption or alternative execution paths. Production deployments may require additional robustness layers.

\textbf{Memory Management}: The framework provides stateless execution by default. Persistent memory across sessions requires external storage integration, which we leave to application developers.

\textbf{Evaluation Scope}: Our benchmarks, while representative, cannot capture the full diversity of real-world agent applications. Performance on highly specialized tasks may vary.

\subsection{Threats to Validity}

We acknowledge several threats to the validity of our experimental findings:

\textbf{Internal Validity}: The developer study employed a between-subjects design for framework assignment, potentially introducing individual skill variations. We mitigated this through random assignment and standardized tutorials, but unmeasured confounds may remain. Additionally, the Hawthorne effect may have influenced participant performance during observed sessions.

\textbf{External Validity}: Our benchmark tasks, while designed to span AgentForge's intended use cases, may not generalize to all agent application domains. The use of a single LLM backend (GPT-4o-mini) limits generalizability to other models, particularly open-source alternatives with different capability profiles. The deployment scenarios were conducted with collaborating organizations, introducing potential selection bias.

\textbf{Construct Validity}: Task completion rate, while objective, may not fully capture output quality dimensions such as coherence, accuracy, or user satisfaction. Development time measurements excluded environment setup, which varies substantially across frameworks and may favor frameworks with simpler installation procedures.

\textbf{Statistical Conclusion Validity}: Despite $50$ trials per task, high variance in LLM outputs and external dependencies (network latency, API rate limits) may affect reliability. The developer study sample size ($n=250$) provides adequate statistical power, though self-selection bias among participants may remain.

\subsection{Future Work}

Several directions merit future investigation:

\textbf{Multi-Agent Extension}: Developing native multi-agent coordination primitives while preserving the framework's simplicity poses an interesting design challenge.

\textbf{Reinforcement Learning Integration}: Enabling agents to improve through interaction feedback could enhance task performance over time.

\textbf{Visual Agent Capabilities}: Extending the skill architecture to incorporate vision-language models for multimodal applications.

\textbf{Formal Verification}: Developing methods to verify agent behavior conformance to specifications, particularly for safety-critical applications.

\section{Conclusion}
\label{sec:conclusion}

This paper presented AgentForge, a lightweight, modular framework for constructing LLM-driven autonomous agents. Through principled architectural design emphasizing modularity, simplicity, and flexibility, AgentForge addresses critical gaps in the agent framework ecosystem. The skill abstraction enables compositional agent construction, the unified backend interface eliminates vendor lock-in, and the declarative configuration system reduces development friction.

Experimental evaluation demonstrated competitive task completion rates (87-93\% across benchmarks) while achieving 62-78\% reduction in development time compared to existing frameworks. The sub-100ms orchestration overhead confirms suitability for real-time applications.

AgentForge represents our contribution toward democratizing LLM agent development. By releasing the framework under the MIT license with comprehensive documentation, we aim to enable researchers and practitioners to rapidly prototype, evaluate, and deploy autonomous agents without sacrificing flexibility or incurring unnecessary complexity. The complete codebase, including benchmark implementations and configuration examples, is available at \texttt{https://github.com/001shahab/agentforge} (commit hash: \texttt{c5957ca} for reproducibility).

The emergence of increasingly capable language models will undoubtedly drive continued innovation in agent architectures. AgentForge's modular design positions it to evolve alongside these advances, incorporating new capabilities while preserving its foundational principles of clarity and extensibility.

\section*{Acknowledgment}
The authors gratefully acknowledge the financial support provided by 3S Holding O\"U, which made this research possible. The open-source repository developed as part of this work is publicly available at:
\texttt{https://github.com/001shahab/AgentForge}.

\bibliographystyle{IEEEtran}
\bibliography{ref}

\end{document}